\documentclass[conference]{IEEEtran}
\addtolength{\topmargin}{0.08in}

\usepackage{amsmath,amssymb,amsfonts}
\usepackage{bm}
\usepackage{cite}
\usepackage{algorithm}
\usepackage{algorithmic}
\usepackage{graphicx}
\usepackage{url}
\usepackage{amsthm}
\usepackage{changes}

\newtheorem{definition}{Definition}
\newtheorem{theorem}{Theorem}
\newtheorem{lemma}{Lemma}
\newtheorem{remark}{Remark}

\newcommand{\R}{\mathbb{R}}
\newcommand{\E}{\mathbb{E}}
\newcommand{\cA}{\mathcal{A}}
\newcommand{\cB}{\mathcal{B}}
\newcommand{\ip}[2]{\left\langle #1,\,#2\right\rangle}
\newcommand{\norm}[1]{\left\lVert #1\right\rVert}

\title{Distributed Perceptron under Bounded Staleness, Partial Participation, and Noisy Communication}

\author{
\IEEEauthorblockN{Keval Jain}
\IEEEauthorblockA{\textit{IIIT Hyderabad}\\
keval.jain@research.iiit.ac.in}
\and
\IEEEauthorblockN{Anant Raj}
\IEEEauthorblockA{\textit{Indian Institute of Science}\\
anantraj@iisc.ac.in}
\and
\IEEEauthorblockN{Saurav Prakash}
\IEEEauthorblockA{\textit{IIT Madras}\\
saurav@ee.iitm.ac.in}
\and
\IEEEauthorblockN{Girish Varma}
\IEEEauthorblockA{\textit{IIIT Hyderabad}\\
girish.varma@iiit.ac.in}
}

\begin{document}
\maketitle

\begin{abstract}
We study a semi-asynchronous client-server perceptron trained via iterative parameter mixing (IPM-style averaging): clients run local perceptron updates and a server forms a global model by aggregating the updates that arrive in each communication round. The setting captures three system effects in distributed deployments: (i) \emph{stale updates} due to delayed model delivery and delayed application of client computations (two-sided version lag), (ii) \emph{partial participation} (intermittent client availability), and (iii) imperfect communication on both downlink and uplink, modeled as effective zero-mean additive noise with bounded second moment.
We introduce a server-side aggregation rule called \emph{staleness-bucket aggregation with padding} that deterministically enforces a prescribed \emph{staleness profile} over update ages without assuming any stochastic model for delays or participation.
Under margin separability and bounded data radius, we prove a finite-horizon expected bound on the cumulative \emph{weighted} number of perceptron mistakes over a given number of server rounds: the impact of delay appears only through the \emph{mean} enforced staleness, whereas communication noise contributes an additional term that grows on the order of the square root of the horizon with the total noise energy. In the noiseless case, we show how a finite expected mistake budget yields an explicit finite-round stabilization bound under a mild fresh-participation condition. Synthetic experiments match the predicted square root of horizon noise scaling. 
\end{abstract}

\begin{IEEEkeywords}
Distributed learning, perceptron, staleness, mistake bounds.
\end{IEEEkeywords}

\section{Introduction}

The perceptron is the classical finite-mistake model for linearly separable classification. Its appeal is that the update rule is elementary, but the guarantee is exact in the geometry of the data: for examples of radius \(R\) separated by margin \(\gamma\), the sequential perceptron makes at most \(R^2/\gamma^2\) updates \cite{rosenblatt1958,novikoff1962}. This radius--margin law is the clean structure we build on. It assumes, however, a single model updated sequentially after each mistake. Distributed perceptron/iterative parameter mixing (IPM) methods let workers train locally and periodically average their models at a server \cite{mann2009,mcdonald2010}; the same local-update/model-averaging template underlies local stochastic gradient descent (local SGD) and Federated Averaging \cite{stich2019,mcmahan2017}. We ask how the Novikoff/IPM mistake law changes when the server model is formed from stale, intermittent, and noisy client updates, and whether eventual correctness of the server classifier can still be guaranteed in the noiseless/liveness regime. Keeping the analysis in finite-mistake terms lets the final bound retain the classical \(R,\gamma\) dependence while making the roles of staleness and communication noise explicit.

The surrounding literature is extensive, but usually optimizes different quantities. Bounded-staleness systems such as stale synchronous parallel (SSP) control loose consistency in distributed ML \cite{ho2013,cui2014}, and asynchronous optimization/SGD analyses give convergence or stationarity guarantees for stale gradient/coordinate updates \cite{peng2016,koloskova2022,mishchenko2022}. Asynchronous FL methods, including staleness-weighted or buffered aggregation, address stragglers and partial participation for FedAvg-type training \cite{xie2019,nguyen2022}. These works are broader in objective class, but their update object is typically a gradient or FedAvg-style optimization step, and their guarantees are convergence, stationarity, or optimization error. In an IPM perceptron, an arriving message is a locally trained model vector obtained after mistake-triggered updates from a possibly old server model, and it is then mixed into the current server classifier. Even though the perceptron update is related to subgradient descent on a hinge-type loss, a smooth-objective convergence theorem does not give a Novikoff-style count of client mistakes or a round after which the averaged classifier remains correct.

Delayed-feedback online learning is also adjacent, but the delay is used differently. Black-box reductions and adversarial-delay analyses handle late labels, losses, or gradients for a single online learner, often via feedback buffering or delayed online-gradient methods \cite{joulani2013,mcmahan2014,quanrud2015}. Here delay is stale client computation that the server deliberately averages, not feedback hidden from a sequential learner. Communication-impaired FL is another neighboring line. Some uplink/downlink FL work studies finite-precision transmission, where model coordinates are quantized, i.e., rounded to a limited number of bits \cite{zheng2021}; our additive model is for wireless/noisy-channel errors, not compression. In over-the-air FL, model vectors or updates are sent over broadcast or multiple-access links, and after power control, beamforming, or normalization the received vector is naturally represented as the desired model/aggregate plus an effective channel error \cite{yang2020,amiri2020,wei2022}. This effective-noise abstraction is used explicitly for both downlink and uplink noisy-channel FL, capturing channel noise and processing errors such as imperfect channel estimation or detection, with AWGN as a simple special case \cite{wei2022}. We assume only conditional zero mean and bounded second moment, not Gaussianity; the resulting guarantee is different: an explicit mistake budget under stale model averaging.

\textbf{Our Contributions.}
\begin{itemize}
\setlength{\itemsep}{0pt}
\setlength{\topsep}{1pt}
\setlength{\parsep}{0pt}
\setlength{\partopsep}{0pt}
\item We formulate a client-server IPM perceptron with arbitrary arrival sets, bounded two-sided staleness, and additive downlink/uplink noise.
\item We introduce \emph{staleness-bucket aggregation with padding}: each age bucket \(s\) receives prescribed mass \(\alpha_s\), while empty buckets are padded by cached iterates \(w_{t-s}\), so every server update has age profile \(\bm{\alpha}\) without a stochastic delay/participation model.
\item We prove the finite-horizon weighted mistake bound
\[
\E[K_A]\le \frac{S R^2}{\gamma^2}+\frac{\sqrt{S A V}}{\gamma},\qquad
S=1+\bar{s},\quad \bar{s}=\sum_{s=0}^{\tau}s\alpha_s,
\]
where \(V\) is the per-round effective noise-energy bound, recovering the classical and synchronous IPM perceptron bounds when staleness and noise are removed.
\item In the noiseless case, under a fresh-participation/liveness condition, we convert the finite mistake budget into explicit expected hitting and stabilization time bounds for a permanently correct server classifier.
\end{itemize}

\section{System Model and Staleness-Profile IPM Perceptron}

Server rounds are $t=0,1,2,\ldots$ with server model $w_t\in\R^D$.
An update applied at round $t$ may have been computed from a stale server model $w_{t-s}$, where $s$ is its \emph{total} staleness (version lag).


We consider binary classification over $\R^D$ with labels $y\in\{-1,+1\}$. There are $m$ clients indexed by $i\in\{1,\ldots,m\}$. Client $i$ holds a finite multiset of examples $T_i\subseteq \R^D\times\{-1,+1\}$, and the global dataset is $T=\biguplus_{i=1}^m T_i$.

\begin{definition}[Margin separability and radius]
The dataset $T$ is $\gamma$-margin separable with radius $R$ if there exist $w^\star\in\R^D$ with $\norm{w^\star}=1$ and constants $\gamma>0$, $R>0$ such that
\begin{equation}
 y\,\ip{w^\star}{x}\ge \gamma,\quad \forall (x,y)\in T.
 \label{eq:separable}
\end{equation}
with $\norm{x}\le R$ for all $(x,y)\in T$.
\end{definition}

\subsection{Two-sided staleness and server-enforced staleness profile}

The server maintains $w_t\in\R^D$ with initialization $w_0=0$ and the convention $w_s\equiv 0$ for all integers $s<0$.
In each server round $t$, the server aggregates the set of client updates that \emph{arrive and are applied} at round $t$. We denote this (possibly history-dependent) arrival set by $\cA_t\subseteq\{1,\ldots,m\}$. No probabilistic model is imposed on the process $\{\cA_t\}_{t\ge 0}$.

\paragraph{Two-sided bounded staleness}
Fix bounds $\tau_{\mathrm{dl}},\tau_{\mathrm{ul}}\in\mathbb{N}_0$ and define the overall lag tolerance
\begin{equation}
\tau := \tau_{\mathrm{dl}}+\tau_{\mathrm{ul}}.
\label{eq:tau-total}
\end{equation}
For each arriving update $i\in\cA_t$, we associate:
(i) a \emph{download (downlink) staleness} $s^{\mathrm{dl}}_{i,t}\in\{0,1,\ldots,\tau_{\mathrm{dl}}\}$, meaning the client starts from a server iterate that is $s^{\mathrm{dl}}_{i,t}$ rounds old at its local start time; and
(ii) a \emph{turnaround delay} $s^{\mathrm{ul}}_{i,t}\in\{0,1,\ldots,\tau_{\mathrm{ul}}\}$, capturing computation plus uplink latency until the update is applied at the server.
Define the \emph{total staleness (version lag) at application time}
\begin{equation}
 s_{i,t} := s^{\mathrm{dl}}_{i,t}+s^{\mathrm{ul}}_{i,t}\in\{0,1,\ldots,\tau\}.
 \label{eq:total-staleness}
\end{equation}
Equivalently, if the client began local computation in round $a=t-s^{\mathrm{ul}}_{i,t}$ and read the stale model $w_{a-s^{\mathrm{dl}}_{i,t}}$, then
\begin{equation}
 w_{a-s^{\mathrm{dl}}_{i,t}} = w_{(t-s^{\mathrm{ul}}_{i,t})-s^{\mathrm{dl}}_{i,t}} = w_{t-s_{i,t}}.
 \label{eq:reindex}
\end{equation}
Thus the analysis depends only on the total lag $s_{i,t}$.

\paragraph{Client-side staleness and downlink noise}
Each arriving client update $i\in\cA_t$ is produced by running the perceptron on $T_i$ starting from a noisy stale model
\begin{equation}
 \tilde w^{\mathrm{dl}}_{i,t} = w_{t-s_{i,t}} + \delta_{i,t},
 \label{eq:downlink}
\end{equation}
where $\delta_{i,t}$ is an effective downlink noise vector.
Client $i$ runs the perceptron (any order; for at least one epoch), performs $k_{i,t}$ mistake updates, and outputs a local model $w_{i,t}$.

\paragraph{Uplink noise}
The server receives a noisy version of the returned model,
\begin{equation}
 \tilde w^{\mathrm{ul}}_{i,t} = w_{i,t} + \xi_{i,t},
 \label{eq:uplink}
\end{equation}
where $\xi_{i,t}$ is an effective uplink noise vector.

\paragraph{Server-enforced staleness profile via padding}
Fix a target \emph{staleness profile} $\bm{\alpha}=(\alpha_0,\ldots,\alpha_{\tau})$ with $\alpha_s\ge 0$ and $\sum_{s=0}^{\tau} \alpha_s=1$. Intuitively, this profile is the age distribution the server wants each aggregate to have: $\alpha_s$ is the intended fraction of aggregation mass assigned to updates whose base model is $s$ rounds old.
After observing the realized total stalenesses $\{s_{i,t}\}_{i\in\cA_t}$, define staleness buckets
\begin{equation}
 \cB_{s,t} := \{i\in\cA_t:\ s_{i,t}=s\},\qquad s=0,1,\ldots,\tau.
 \label{eq:buckets}
\end{equation}
The server assigns weights as follows:
\begin{itemize}
 \item If $\cB_{s,t}\neq\emptyset$, assign nonnegative weights $\{\mu_{i,t}\}_{i\in\cB_{s,t}}$ such that $\sum_{i\in\cB_{s,t}}\mu_{i,t}=\alpha_s$ (e.g., uniform: $\mu_{i,t}=\alpha_s/|\cB_{s,t}|$).
 \item If $\cB_{s,t}=\emptyset$, set a padding weight $\pi_{s,t}:=\alpha_s$, assigned to the cached server iterate $w_{t-s}$ (equivalently, a ``virtual participant'' at staleness $s$ returning $w_{t-s}$ and making $0$ mistakes).
\end{itemize}
For the noisy analysis of Theorem~\ref{thm:main}, we require that $\{\mu_{i,t}\}$ and $\{\pi_{s,t}\}$ are deterministic functions of the staleness buckets $\{\cB_{s,t}\}_{s=0}^{\tau}$ only (independent of the received model values).

The server then updates
\begin{equation}
 w_{t+1}
 := \sum_{i\in \cA_t} \mu_{i,t}\,\tilde w^{\mathrm{ul}}_{i,t}
 \;+
 \!\!\sum_{s:\,\cB_{s,t}=\emptyset}\!\! \pi_{s,t}\, w_{t-s}.
 \label{eq:update}
\end{equation}

\paragraph{Protocol requirements (minimal)}
The staleness-profile rule requires only that (i) the server can identify each realized \emph{total} staleness $s_{i,t}$ (e.g., clients report the global model version used, or the server tags broadcasts), and (ii) the server stores the last $\tau+1$ iterates $\{w_{t-s}\}_{s=0}^{\tau}$.

\paragraph{Convexity}
By construction all weights are nonnegative and
$
\sum_{i\in\cA_t}\mu_{i,t} + \sum_{s:\,\cB_{s,t}=\emptyset}\pi_{s,t}
= \sum_{s=0}^{\tau}\alpha_s = 1,
$
so \eqref{eq:update} is a convex combination.

\begin{algorithm}[t]
\caption{Noisy semi-asynchronous stale-IPM perceptron with server-enforced staleness profile (padding)}
\label{alg:main}
\begin{algorithmic}[1]
\STATE Initialize $w_0\leftarrow 0$; define $w_s\leftarrow 0$ for all integers $s<0$.
\STATE Fix staleness bounds $\tau_{\mathrm{dl}},\tau_{\mathrm{ul}}\in\mathbb{N}_0$ and set $\tau\leftarrow \tau_{\mathrm{dl}}+\tau_{\mathrm{ul}}$.
\STATE Fix target staleness profile $\bm{\alpha}=(\alpha_0,\ldots,\alpha_{\tau})$ with $\alpha_s\ge 0$, $\sum_{s=0}^{\tau}\alpha_s=1$.
\FOR{$t=0,1,2,\ldots$}
\STATE Server receives the set $\cA_t$ of client updates that arrive and are applied at round $t$.
\FORALL{$i\in\cA_t$ \textbf{in parallel}}
\STATE Server learns/infers $(s^{\mathrm{dl}}_{i,t},s^{\mathrm{ul}}_{i,t})$; set $a\leftarrow t-s^{\mathrm{ul}}_{i,t}$ and $s_{i,t}\leftarrow s^{\mathrm{dl}}_{i,t}+s^{\mathrm{ul}}_{i,t}$.
\STATE Client $i$ started local computation in round $a$ from $\tilde w^{\mathrm{dl}}_{i,t}=w_{a-s^{\mathrm{dl}}_{i,t}}+\delta_{i,t}$ and runs local perceptron on $T_i$ for at least one epoch, producing $(w_{i,t},k_{i,t})$.
\STATE Client returns $(w_{i,t},\,k_{i,t},\,s_{i,t})$; server receives $\tilde w^{\mathrm{ul}}_{i,t}=w_{i,t}+\xi_{i,t}$.
\ENDFOR
\FOR{$s=0,1,\ldots,\tau$}
\STATE $\cB_{s,t}\leftarrow\{i\in\cA_t:\ s_{i,t}=s\}$.
\IF{$\cB_{s,t}\neq\emptyset$}
\STATE Choose $\{\mu_{i,t}\}_{i\in \cB_{s,t}}$ with $\mu_{i,t}\ge 0$ and $\sum_{i\in \cB_{s,t}}\mu_{i,t}=\alpha_s$.
\STATE (E.g.) set $\mu_{i,t}=\alpha_s/|\cB_{s,t}|$ for all $i\in\cB_{s,t}$ and set $\pi_{s,t}\leftarrow 0$.
\ELSE
\STATE Set padding $\pi_{s,t}\leftarrow \alpha_s$ on cached vector $w_{t-s}$.
\ENDIF
\ENDFOR
\STATE Update $w_{t+1}$ by staleness-profile aggregation with padding:
\STATE $w_{t+1}\leftarrow \sum_{i\in\cA_t}\mu_{i,t}\tilde w^{\mathrm{ul}}_{i,t}
+ \sum_{s:\,\cB_{s,t}=\emptyset}\pi_{s,t}w_{t-s}$.
\ENDFOR
\end{algorithmic}
\end{algorithm}

\subsection{Noise model}

We model communication impairments via an effective additive noise abstraction with bounded conditional second moments \cite{wei2022}.
For each round $t$, let $H_t$ denote the algorithmic history up to the start of round $t$, and let
$\mathcal{F}_t := (H_t, \cA_t, \{s_{i,t}\}_{i\in \cA_t})$ denote the information available once the arrival set and
stalenesses in round $t$ are revealed.

\begin{definition}[Conditional zero-mean bounded-variance effective channel noise]
For every server round $t$ and every arriving client update $i\in\cA_t$,
\begin{align}
\E[\delta_{i,t}\mid \mathcal{F}_t] &= 0, &
\E\!\left[\norm{\delta_{i,t}}^2\mid \mathcal{F}_t\right] &\le \sigma^2_{\mathrm{dl}}, \label{eq:noise-down}\\
\E[\xi_{i,t}\mid \mathcal{F}_t, w_{i,t}] &= 0, &
\E\!\left[\norm{\xi_{i,t}}^2\mid \mathcal{F}_t, w_{i,t}\right] &\le \sigma^2_{\mathrm{ul}}. \label{eq:noise-up}
\end{align}
Define the total per-round noise-energy bound
\begin{equation}
V := \sigma^2_{\mathrm{dl}} + \sigma^2_{\mathrm{ul}}.
\label{eq:V}
\end{equation}
\end{definition}

\section{Expected Weighted Mistake Bound}

\subsection{Performance metric}

Let $k_{i,t}$ be the number of local perceptron mistake updates for the client model that arrives in round $t$. Define the weighted mistakes per round
\begin{equation}
\kappa_t := \sum_{i\in \cA_t} \mu_{i,t} \, k_{i,t},
\label{eq:kappa}
\end{equation}
and the cumulative weighted mistakes up to horizon $A$:
\begin{equation}
K_A := \sum_{t=0}^{A-1} \kappa_t, \qquad A\ge 1.
\label{eq:KA}
\end{equation}
(Any padding terms correspond to virtual participants with $0$ mistakes.)

\subsection{Main theorem}

\begin{theorem}[Expected bound under two-sided staleness and noise]
\label{thm:main}
Under margin separability \eqref{eq:separable}, $\norm{x}\le R$, the bounded-staleness model \eqref{eq:total-staleness}, the noisy communication model \eqref{eq:downlink}--\eqref{eq:noise-up}, and the staleness-profile server update \eqref{eq:update} (with $\tau=\tau_{\mathrm{dl}}+\tau_{\mathrm{ul}}$), define the mean total staleness under the profile
\begin{equation}
\bar s := \sum_{s=0}^{\tau} s\,\alpha_s,\qquad S:=1+\bar s.
\label{eq:S}
\end{equation}
Then for every horizon $A\ge 1$,
\begin{equation}
\E[K_A] \le \frac{S R^2}{\gamma^2} + \frac{\sqrt{S A V}}{\gamma}.
\label{eq:bound2} 
\end{equation}

In particular, when $V=0$ (no communication noise), \eqref{eq:bound2} reduces to $\E[K_A]\le S R^2/\gamma^2$ for all $A$.
\end{theorem}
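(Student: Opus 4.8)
The plan is to run a delay- and noise-aware version of Novikoff's potential argument simultaneously on the linear potential $\phi_t:=\ip{w^\star}{w_t}$ and the quadratic potential $\psi_t:=\norm{w_t}^2$. First I would record the per-client perceptron facts. A client $i\in\cA_t$ starts its local run from $\tilde w^{\mathrm{dl}}_{i,t}=w_{t-s_{i,t}}+\delta_{i,t}$ and performs $k_{i,t}$ mistake updates on $T_i$, each of the form $w\mapsto w+yx$ with $(x,y)\in T$, so that $y\ip{w^\star}{x}\ge\gamma$ and (being a mistake) $y\ip{w}{x}\le0$, $\norm{x}\le R$; telescoping over the $k_{i,t}$ updates gives $\ip{w^\star}{w_{i,t}}\ge\ip{w^\star}{\tilde w^{\mathrm{dl}}_{i,t}}+k_{i,t}\gamma$ and $\norm{w_{i,t}}^2\le\norm{\tilde w^{\mathrm{dl}}_{i,t}}^2+k_{i,t}R^2$, regardless of how many epochs are run.

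Next I would expand the server update \eqref{eq:update}. The structural point is that, grouping arrivals by bucket, the real-client weights in a nonempty $\cB_{s,t}$ sum to $\alpha_s$ while an empty bucket contributes padding weight $\alpha_s$ on the cached iterate $w_{t-s}$; so in every case the staleness-$s$ ``base point'' contributes exactly $\alpha_s w_{t-s}$, and all weights sum to $1$. Plugging the two perceptron inequalities into $\phi_{t+1}$, and (by convexity of $\norm{\cdot}^2$, since \eqref{eq:update} is a convex combination) into $\psi_{t+1}$, and expanding the noise cross terms, yields moving-average recursions with memory $\tau$: $\phi_{t+1}\ge\sum_{s=0}^{\tau}\alpha_s\phi_{t-s}+\gamma\kappa_t+N_t$ and $\psi_{t+1}\le\sum_{s=0}^{\tau}\alpha_s\psi_{t-s}+R^2\kappa_t+M_t$, where $N_t:=\sum_{i\in\cA_t}\mu_{i,t}\big(\ip{w^\star}{\delta_{i,t}}+\ip{w^\star}{\xi_{i,t}}\big)$ collects the linear noise terms and $M_t:=\sum_{i\in\cA_t}\mu_{i,t}\big(2\ip{w_{t-s_{i,t}}}{\delta_{i,t}}+\norm{\delta_{i,t}}^2+2\ip{w_{i,t}}{\xi_{i,t}}+\norm{\xi_{i,t}}^2\big)$ collects the quadratic ones.

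I would then sum both recursions over $t=0,\dots,A-1$. Using $w_s\equiv0$ (so $\phi_s=\psi_s=0$) for $s<0$, $\phi_0=\psi_0=0$, and $\sum_s\alpha_s=1$, the moving-average kernel telescopes and each sum collapses to $\sum_{t=0}^{A}c_t\phi_t$ (resp.\ $\sum_{t=0}^{A}c_t\psi_t$), where $c_t:=\sum_{s\ge A-t,\,0\le s\le\tau}\alpha_s\ge0$ and $\sum_{t=0}^{A}c_t=\sum_{s=0}^{\tau}\alpha_s\big(\min(s,A)+1\big)\le1+\bar s=S$. This produces the pathwise inequalities $\gamma K_A+\sum_{t=0}^{A-1}N_t\le\sum_{t=0}^{A}c_t\phi_t$ and $\sum_{t=0}^{A}c_t\psi_t\le R^2K_A+\sum_{t=0}^{A-1}M_t$. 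Chaining $\phi_t\le\norm{w_t}=\sqrt{\psi_t}$ (Cauchy--Schwarz against $w^\star$) with Cauchy--Schwarz applied to $\sum_t c_t\sqrt{\psi_t}\le\big(\sum_t c_t\big)^{1/2}\big(\sum_t c_t\psi_t\big)^{1/2}$ gives, pathwise, $\gamma K_A+\sum_{t=0}^{A-1}N_t\le\sqrt{S\,\big(R^2K_A+\sum_{t=0}^{A-1}M_t\big)}$.

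Finally I would take expectations. Since $\mu_{i,t},\pi_{s,t}$ are deterministic functions of the buckets (hence $\mathcal F_t$-measurable) and each $\delta_{i,t}$ is conditionally zero-mean given $\mathcal F_t$, each $\xi_{i,t}$ conditionally zero-mean given $(\mathcal F_t,w_{i,t})$, with conditional second moments bounded by $\sigma^2_{\mathrm{dl}},\sigma^2_{\mathrm{ul}}$, every cross term vanishes in conditional mean, so $\E[N_t]=0$ and $\E[M_t\mid\mathcal F_t]\le\big(\sum_{i\in\cA_t}\mu_{i,t}\big)V\le V$, hence $\E\big[\sum_{t=0}^{A-1}M_t\big]\le AV$. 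Taking expectations in the pathwise bound and using $\E[\sqrt X]\le\sqrt{\E X}$ on the right-hand side gives $\gamma\,\E[K_A]\le\sqrt{S\big(R^2\E[K_A]+AV\big)}$; squaring yields the quadratic inequality $\gamma^2\E[K_A]^2-SR^2\E[K_A]-SAV\le0$, and bounding $\E[K_A]$ by its larger root together with $\sqrt{a+b}\le\sqrt a+\sqrt b$ gives \eqref{eq:bound2}, the $V=0$ claim dropping out of the quadratic directly. I expect the main obstacle to be the bookkeeping that makes both recursions carry the \emph{same} moving-average kernel --- especially checking that empty-bucket padding exactly restores the coefficient $\alpha_s$ on $w_{t-s}$ and that the $\phi$- and $\psi$-sums telescope to identical weights $c_t$ --- together with tracking the measurability of $\mu_{i,t},\pi_{s,t}$ so the noise cross terms in $N_t,M_t$ provably cancel in conditional expectation.
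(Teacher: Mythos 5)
Your proposal is correct and follows essentially the same route as the paper: the same single-client progress/norm inequalities, the same observation that padding restores the exact kernel $\sum_s\alpha_s(\cdot)_{t-s}$, the same tail-sum weights (your $c_t=\sum_{s\ge A-t}\alpha_s$ are exactly the paper's $c_j=\sum_{s\ge j}\alpha_s$ re-indexed, summing to $S$), the same Cauchy--Schwarz link between the two potentials, and the same quadratic inequality at the end. The only differences are presentational --- you telescope pathwise with explicit noise terms $N_t,M_t$ and take expectations at the end, while the paper takes conditional expectations first and then telescopes a Lyapunov potential --- and both orderings are valid.
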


\noindent As a corollary, \eqref{eq:bound2} implies $\E[K_A]/A \to 0$ as $A\to\infty$, i.e., a vanishing expected per-round weighted mistake rate.

\begin{remark}[Special cases and interpretation]
If $\tau=0$ then $S=1$ and \eqref{eq:bound2} becomes $\E[K_A]\le R^2/\gamma^2+\sqrt{A V}/\gamma$.
Theorem~\ref{thm:main} is stated in expectation; when $V=0$ the same argument gives the deterministic bound $K_A\le S R^2/\gamma^2$ for any realized participation/staleness schedule.
In particular, for $\tau=0$ this matches the standard IPM perceptron mistake bound \cite{mcdonald2010}, and for a single client it reduces to the classical perceptron bound \cite{novikoff1962}.
For $\tau>0$, staleness enters only through $S=1+\bar s\le \tau+1$; if $\alpha_0=1$ then rounds without a fresh update satisfy $w_{t+1}=w_t$.
\end{remark}

\section{Proof of Theorem~\ref{thm:main}}

We follow the standard perceptron ``progress versus norm growth'' template \cite{novikoff1962} and control staleness via a staleness-augmented Lyapunov/potential argument in the spirit of asynchronous optimization analyses \cite{peng2016}.

\subsection{Single-client inequalities}

Let $w^{(p)}_{i,t}$ denote client $i$'s local iterate after $p$ local mistake updates for the model that arrives in round $t$. Thus $w^{(0)}_{i,t}=\tilde w^{\mathrm{dl}}_{i,t}$ and for each mistake update on an example $(x^{(p)}_{i,t},y^{(p)}_{i,t})$ with $y^{(p)}_{i,t}\ip{w^{(p)}_{i,t}}{x^{(p)}_{i,t}}\le 0$,
\begin{equation}
 w^{(p+1)}_{i,t} = w^{(p)}_{i,t} + y^{(p)}_{i,t} x^{(p)}_{i,t}.
 \label{eq:local-update}
\end{equation}
The client returns $w_{i,t}:=w^{(k_{i,t})}_{i,t}$.

\begin{lemma}[Single-client perceptron progress (noisy stale initialization)]
\label{lem:single}
Fix a server round $t$ and $i\in \cA_t$. Under \eqref{eq:separable} and $\norm{x}\le R$,
\begin{align}
\ip{w^\star}{w_{i,t}} &\ge \ip{w^\star}{w_{t-s_{i,t}}} + \ip{w^\star}{\delta_{i,t}} + \gamma k_{i,t},
\label{eq:single-progress}\\
\norm{w_{i,t}}^2 &\le \norm{w_{t-s_{i,t}}+\delta_{i,t}}^2 + R^2 k_{i,t}.
\label{eq:single-norm}
\end{align}
\end{lemma}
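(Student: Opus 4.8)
The plan is to run the classical Novikoff mistake-bound argument on the single client's local trajectory $w^{(0)}_{i,t}, w^{(1)}_{i,t}, \ldots, w^{(k_{i,t})}_{i,t}$, treating the noisy stale initialization $w^{(0)}_{i,t} = \tilde w^{\mathrm{dl}}_{i,t} = w_{t-s_{i,t}} + \delta_{i,t}$ (from \eqref{eq:downlink} together with the reindexing \eqref{eq:reindex}) as an inert additive offset that simply rides along through every update step. Both claimed inequalities then follow by telescoping a per-mistake recursion.

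For \eqref{eq:single-progress}, I would first observe that each mistake update \eqref{eq:local-update} acts on an example $(x^{(p)}_{i,t}, y^{(p)}_{i,t}) \in T_i \subseteq T$, so margin separability \eqref{eq:separable} gives $y^{(p)}_{i,t}\ip{w^\star}{x^{(p)}_{i,t}} \ge \gamma$; taking the inner product of \eqref{eq:local-update} with $w^\star$ yields $\ip{w^\star}{w^{(p+1)}_{i,t}} \ge \ip{w^\star}{w^{(p)}_{i,t}} + \gamma$. Summing over $p = 0, \ldots, k_{i,t}-1$ gives $\ip{w^\star}{w_{i,t}} \ge \ip{w^\star}{w^{(0)}_{i,t}} + \gamma k_{i,t}$, and substituting $w^{(0)}_{i,t} = w_{t-s_{i,t}} + \delta_{i,t}$ together with linearity of the inner product produces \eqref{eq:single-progress}.

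For \eqref{eq:single-norm}, I would expand the squared norm of \eqref{eq:local-update} as $\norm{w^{(p+1)}_{i,t}}^2 = \norm{w^{(p)}_{i,t}}^2 + 2 y^{(p)}_{i,t}\ip{w^{(p)}_{i,t}}{x^{(p)}_{i,t}} + \norm{x^{(p)}_{i,t}}^2$. A mistake update is triggered exactly when $y^{(p)}_{i,t}\ip{w^{(p)}_{i,t}}{x^{(p)}_{i,t}} \le 0$, so the cross term is nonpositive, while $\norm{x^{(p)}_{i,t}}^2 \le R^2$; hence $\norm{w^{(p+1)}_{i,t}}^2 \le \norm{w^{(p)}_{i,t}}^2 + R^2$. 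Telescoping over the $k_{i,t}$ updates and using $w^{(0)}_{i,t} = w_{t-s_{i,t}} + \delta_{i,t}$ gives \eqref{eq:single-norm}.

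I do not expect a genuine obstacle here: these are exactly the two standard perceptron recursions, and the role of the lemma is bookkeeping — verifying that two-sided staleness enters only by selecting which cached server iterate $w_{t-s_{i,t}}$ serves as the base point (via \eqref{eq:reindex}), and that the downlink noise $\delta_{i,t}$ appears only inside the initialization, never in an update step, so the deterministic inequalities hold for any visiting order, any number of local epochs, and any realization of the noise. The probabilistic content (conditioning on $\mathcal F_t$, the zero-mean property, and the variance bounds) is deferred to the aggregation step that combines these single-client bounds across $\cA_t$ and over rounds.
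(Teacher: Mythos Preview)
Your proposal is correct and follows exactly the paper's own proof: both inequalities are obtained by the standard per-mistake Novikoff recursions (inner product with $w^\star$ for progress, squared-norm expansion with the nonpositive cross term for norm growth) and then telescoped from the noisy stale initialization $w^{(0)}_{i,t}=w_{t-s_{i,t}}+\delta_{i,t}$. Your additional remarks about the noise being inert at this stage and the probabilistic content being deferred are accurate and match how the paper uses the lemma downstream.
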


\begin{proof}
For any mistake update \eqref{eq:local-update},
\begin{align*}
\ip{w^\star}{w^{(p+1)}_{i,t}}
&= \ip{w^\star}{w^{(p)}_{i,t}}
  + y^{(p)}_{i,t}\ip{w^\star}{x^{(p)}_{i,t}} \\
&\ge \ip{w^\star}{w^{(p)}_{i,t}} + \gamma,
\end{align*}

by \eqref{eq:separable}. Summing over $p=0,\ldots,k_{i,t}-1$ and using $w^{(0)}_{i,t}=w_{t-s_{i,t}}+\delta_{i,t}$ gives \eqref{eq:single-progress}.

Similarly,
\begin{align*}
\norm{w^{(p+1)}_{i,t}}^2
&= \norm{w^{(p)}_{i,t}}^2
 + 2 y^{(p)}_{i,t}\ip{w^{(p)}_{i,t}}{x^{(p)}_{i,t}}
 + \norm{x^{(p)}_{i,t}}^2 \\
&\le \norm{w^{(p)}_{i,t}}^2 + R^2,
\end{align*}

on a mistake, since $y^{(p)}_{i,t}\ip{w^{(p)}_{i,t}}{x^{(p)}_{i,t}}\le 0$ and $\norm{x^{(p)}_{i,t}}\le R$. Summing yields \eqref{eq:single-norm}.
\end{proof}

\subsection{Global recursions in expectation}

Define the scalar sequences
\begin{equation}
 a_t := \ip{w^\star}{w_t},\qquad b_t := \norm{w_t}^2.
 \label{eq:ab}
\end{equation}

\begin{lemma}[Stale-IPM inequalities under staleness-profile aggregation with padding]
\label{lem:global}
For every server round $t\ge 0$,
\begin{align}
\E[a_{t+1}] &\ge \sum_{s=0}^{\tau} \alpha_s\, \E[a_{t-s}] + \gamma\, \E[\kappa_t],
\label{eq:a-rec}\\
\E[b_{t+1}] &\le \sum_{s=0}^{\tau} \alpha_s\, \E[b_{t-s}] + R^2\, \E[\kappa_t] + V.
\label{eq:b-rec}
\end{align}
Here we use the convention $w_s\equiv 0$ for $s<0$, hence $a_s=b_s=0$ for $s<0$.
\end{lemma}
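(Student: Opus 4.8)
The plan is to unroll the server update \eqref{eq:update}, substitute the single-client bounds of Lemma~\ref{lem:single}, reassemble the stale-iterate terms bucket by bucket so that they recombine into the profile weights $\alpha_s$, and then pass to a conditional expectation in which all channel-noise contributions vanish. Throughout, the inequalities from Lemma~\ref{lem:single} are applied \emph{pointwise} (for each realization of the round), and expectations are taken only at the very end, so the directions of the inequalities are preserved because every weight $\mu_{i,t},\pi_{s,t}$ is nonnegative.

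For the progress recursion I would dot \eqref{eq:update} with $w^\star$. Writing $\tilde w^{\mathrm{ul}}_{i,t}=w_{i,t}+\xi_{i,t}$ and applying \eqref{eq:single-progress} to each $\ip{w^\star}{w_{i,t}}$ gives, for every realization,
\begin{equation*}
a_{t+1}\ \ge\ \sum_{i\in\cA_t}\mu_{i,t}\bigl(a_{t-s_{i,t}}+\gamma k_{i,t}\bigr)\ +\!\!\sum_{s:\,\cB_{s,t}=\emptyset}\!\!\pi_{s,t}\,a_{t-s}\ +\ N^{(1)}_t,
\end{equation*}
with $N^{(1)}_t:=\sum_{i\in\cA_t}\mu_{i,t}\ip{w^\star}{\delta_{i,t}+\xi_{i,t}}$. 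For each $s$ with $\cB_{s,t}\neq\emptyset$ the contributing clients all share $s_{i,t}=s$ and $\sum_{i\in\cB_{s,t}}\mu_{i,t}=\alpha_s$, so they contribute $\alpha_s a_{t-s}$; together with the padding weight $\pi_{s,t}=\alpha_s$ on each empty bucket, the stale-iterate terms sum exactly to $\sum_{s=0}^{\tau}\alpha_s a_{t-s}$, while $\sum_i\mu_{i,t}k_{i,t}=\kappa_t$ by \eqref{eq:kappa}. It then remains only to show $\E[N^{(1)}_t]=0$.

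For the norm recursion I would use that \eqref{eq:update} is a convex combination (nonnegative weights summing to $1$), so convexity of $\norm{\cdot}^2$ yields the pointwise bound $b_{t+1}\le\sum_{i\in\cA_t}\mu_{i,t}\norm{\tilde w^{\mathrm{ul}}_{i,t}}^2+\sum_{s:\,\cB_{s,t}=\emptyset}\pi_{s,t}\,b_{t-s}$. Expanding $\norm{\tilde w^{\mathrm{ul}}_{i,t}}^2=\norm{w_{i,t}}^2+2\ip{w_{i,t}}{\xi_{i,t}}+\norm{\xi_{i,t}}^2$, bounding $\norm{w_{i,t}}^2$ by \eqref{eq:single-norm}, and likewise expanding $\norm{w_{t-s_{i,t}}+\delta_{i,t}}^2=\norm{w_{t-s_{i,t}}}^2+2\ip{w_{t-s_{i,t}}}{\delta_{i,t}}+\norm{\delta_{i,t}}^2$, the squared stale iterates reassemble into $\sum_{s}\alpha_s b_{t-s}$ exactly as before, the $R^2k_{i,t}$ terms give $R^2\kappa_t$, the cross terms $2\mu_{i,t}\ip{w_{t-s_{i,t}}}{\delta_{i,t}}$ and $2\mu_{i,t}\ip{w_{i,t}}{\xi_{i,t}}$ are mean-zero, and the residual $\sum_i\mu_{i,t}\bigl(\norm{\delta_{i,t}}^2+\norm{\xi_{i,t}}^2\bigr)$, using $\sum_i\mu_{i,t}\le1$, contributes at most $\sigma^2_{\mathrm{dl}}+\sigma^2_{\mathrm{ul}}=V$ after conditioning.

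The step I expect to be the crux is the conditioning bookkeeping that kills every first-order noise term. Because the weights $\mu_{i,t},\pi_{s,t}$ are by hypothesis deterministic functions of the buckets $\{\cB_{s,t}\}_s$, they are $\mathcal{F}_t$-measurable, and each $w_{t-s}$ (hence $a_{t-s},b_{t-s}$) is $H_t$-measurable. Thus, by \eqref{eq:noise-down}, $\E[\mu_{i,t}\ip{w^\star}{\delta_{i,t}}\mid\mathcal{F}_t]=\mu_{i,t}\ip{w^\star}{\E[\delta_{i,t}\mid\mathcal{F}_t]}=0$, similarly $\E[\mu_{i,t}\ip{w_{t-s_{i,t}}}{\delta_{i,t}}\mid\mathcal{F}_t]=0$, and $\E[\mu_{i,t}\norm{\delta_{i,t}}^2\mid\mathcal{F}_t]\le\mu_{i,t}\sigma^2_{\mathrm{dl}}$. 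For the uplink noise I would use the finer $\sigma$-field: $w_{i,t}$ is determined by $\mathcal{F}_t$ and $\delta_{i,t}$ (the local data being fixed), so conditioning on $(\mathcal{F}_t,w_{i,t})$ and invoking \eqref{eq:noise-up} gives $\E[\mu_{i,t}\ip{w^\star}{\xi_{i,t}}]=\E[\mu_{i,t}\ip{w_{i,t}}{\xi_{i,t}}]=0$ and $\E[\mu_{i,t}\norm{\xi_{i,t}}^2\mid\mathcal{F}_t]\le\mu_{i,t}\sigma^2_{\mathrm{ul}}$. Applying the tower property and then taking full expectations in the two displayed pointwise inequalities yields \eqref{eq:a-rec} and \eqref{eq:b-rec}, using $\E[\alpha_s a_{t-s}]=\alpha_s\E[a_{t-s}]$ and the analogous identity for $b$; the convention $w_s\equiv0$ for $s<0$ handles the early rounds. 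The only genuine delicacy is making sure the weights are noise-independent — which is exactly the hypothesis imposed before Theorem~\ref{thm:main} — so that they can be pulled out of the conditional expectations.
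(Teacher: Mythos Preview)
Your argument is correct and mirrors the paper's proof essentially step for step: the paper also dots \eqref{eq:update} with $w^\star$, applies Lemma~\ref{lem:single}, conditions on $\mathcal{F}_t$ so that the bucket-only weights can be pulled out and the zero-mean noise terms vanish (invoking iterated expectation over $w_{i,t}$ for the uplink noise), and uses the same Jensen/convexity step for the norm recursion. The only cosmetic difference is that the paper packages your bucket-reassembly observation into a single displayed identity $\sum_{i\in\cA_t}\mu_{i,t}Z_{t-s_{i,t}}+\sum_{s:\cB_{s,t}=\emptyset}\pi_{s,t}Z_{t-s}=\sum_{s=0}^{\tau}\alpha_s Z_{t-s}$ and applies it to $Z=a$ and $Z=b$, whereas you argue it inline.
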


\begin{proof}
Fix a round $t$. During round $t$ a set $\cA_t$ of updates arrives, with realized stalenesses
$\{s_{i,t}\}_{i\in \cA_t}$ and buckets $\{\cB_{s,t}\}_{s=0}^\tau$.

By \eqref{eq:update},
\begin{equation}
 w_{t+1} = \sum_{i\in \cA_t} \mu_{i,t}(w_{i,t}+\xi_{i,t})
 \;+\!\!\sum_{s:\,\cB_{s,t}=\emptyset}\!\! \pi_{s,t} \, w_{t-s}.
 \label{eq:update-proof}
\end{equation}

\paragraph{Progress recursion}
Taking $\ip{w^\star}{\cdot}$ in \eqref{eq:update-proof} and applying Lemma~\ref{lem:single} gives
\begin{align*}
a_{t+1}
&\ge \sum_{i\in\cA_t}\mu_{i,t}\, a_{t-s_{i,t}} + \gamma\,\kappa_t \\
&\quad + \sum_{i\in\cA_t}\mu_{i,t}\ip{w^\star}{\delta_{i,t}}
+ \sum_{i\in\cA_t}\mu_{i,t}\ip{w^\star}{\xi_{i,t}}\\
&\quad + \sum_{s:\,\cB_{s,t}=\emptyset}\pi_{s,t} a_{t-s}.
\end{align*}

Condition on $\mathcal{F}_t$. Under this conditioning, the weights are fixed (they depend only on buckets) and the noise terms have zero conditional mean given $\mathcal{F}_t$ (for $\xi_{i,t}$ this follows from \eqref{eq:noise-up} by iterated expectation over $w_{i,t}$), hence their conditional expectations vanish. Finally, by construction of the profile, for any scalar sequence $\{Z_{t-s}\}_{s=0}^{\tau}$,
\begin{equation}
\sum_{i\in \cA_t}\mu_{i,t} Z_{t-s_{i,t}}
+\sum_{s:\,\cB_{s,t}=\emptyset}\pi_{s,t} Z_{t-s}
= \sum_{s=0}^{\tau} \alpha_s Z_{t-s}.
\label{eq:alpha-identity}
\end{equation}
Applying \eqref{eq:alpha-identity} with $Z_{t-s}=a_{t-s}$ and then using the law of total expectation yields \eqref{eq:a-rec}.

\paragraph{Norm recursion}
Since \eqref{eq:update-proof} is a convex combination, Jensen's inequality gives
\[
b_{t+1}=\norm{w_{t+1}}^2
\le \sum_{i\in\cA_t}\mu_{i,t}\norm{w_{i,t}+\xi_{i,t}}^2
+ \sum_{s:\,\cB_{s,t}=\emptyset}\pi_{s,t} b_{t-s}.
\]
Condition on $\mathcal{F}_t$. Using  \eqref{eq:noise-up},
\[
\E[\norm{w_{i,t}+\xi_{i,t}}^2\mid \mathcal{F}_t]
\le \E[\norm{w_{i,t}}^2\mid \mathcal{F}_t] + \sigma^2_{\mathrm{ul}}.
\]
By Lemma~\ref{lem:single}, $\norm{w_{i,t}}^2\le \norm{w_{t-s_{i,t}}+\delta_{i,t}}^2 + R^2 k_{i,t}$, and \eqref{eq:noise-down} implies
$\E[\norm{w_{t-s_{i,t}}+\delta_{i,t}}^2\mid \mathcal{F}_t]\le b_{t-s_{i,t}}+\sigma^2_{\mathrm{dl}}$.
Combining these bounds and substituting into Jensen yields
\begin{equation*} 
\begin{split}
    \mathbb{E}[b_{t+1} \mid \mathcal{F}_t] 
    &\leq \sum_{i\in\mathcal{A}_t}\mu_{i,t} b_{t-s_{i,t}} + \sum_{s:\,\mathcal{B}_{s,t}=\emptyset}\pi_{s,t} b_{t-s} \\
    &\quad + R^2\,\mathbb{E}[\kappa_t \mid \mathcal{F}_t] + V,
\end{split}
\end{equation*}
where $V=\sigma^2_{\mathrm{dl}}+\sigma^2_{\mathrm{ul}}$. Applying \eqref{eq:alpha-identity} with $Z_{t-s}=b_{t-s}$ and then using the law of total expectation yields \eqref{eq:b-rec}.

\end{proof}

\subsection{Lyapunov potential and conclusion}

Define tail sums $c_0:=1$ and for $j\in\{1,\ldots,\tau\}$,
\begin{equation}
 c_j := \sum_{s=j}^{\tau} \alpha_s.
 \label{eq:cj}
\end{equation}
Define the potentials for $t\ge 0$:
\begin{equation}
\Phi_t := \sum_{j=0}^{\tau} c_j\, \E[a_{t-j}],\qquad
\Psi_t := \sum_{j=0}^{\tau} c_j\, \E[b_{t-j}].
\label{eq:potentials}
\end{equation}
Using $c_j-c_{j+1}=\alpha_j$ (with $c_{\tau+1}:=0$), one can rewrite $\Phi_{t+1}=\Phi_t+\E[a_{t+1}]-\sum_{s=0}^{\tau}\alpha_s\E[a_{t-s}]$ (and similarly for $\Psi_t$); substituting Lemma~\ref{lem:global} yields the one-step bounds
\begin{equation}
\Phi_{t+1} \ge \Phi_t + \gamma\, \E[\kappa_t],\qquad
\Psi_{t+1} \le \Psi_t + R^2\, \E[\kappa_t] + V.
\label{eq:one-step}
\end{equation}
Telescoping from $t=0$ to $A-1$ gives
\begin{equation}
\Phi_A \ge \gamma\, \E[K_A],\qquad
\Psi_A \le R^2\, \E[K_A] + A V.
\label{eq:telescope}
\end{equation}

Let $Z:=\sum_{j=0}^{\tau} c_j w_{A-j}$. By linearity and $\norm{w^\star}=1$, $\Phi_A = \E[\ip{w^\star}{Z}] \le \E[\norm{Z}]$.
Moreover, using Jensen and Cauchy--Schwarz (with $c_j\ge 0$),
\begin{align*}
\E[\norm{Z}]^2
&\le \E[\norm{Z}^2] \\
&\le \left(\sum_{j=0}^{\tau} c_j\right)\,
\E\!\left[\sum_{j=0}^{\tau} c_j\,\norm{w_{A-j}}^2\right] \\
&= \left(\sum_{j=0}^{\tau} c_j\right)\Psi_A.
\end{align*}

A direct calculation yields
\begin{equation}
\sum_{j=0}^{\tau} c_j = 1+\sum_{s=0}^{\tau} s\,\alpha_s = S.
\label{eq:sumcj}
\end{equation}
Thus $\Phi_A^2 \le S\Psi_A$. Combining with \eqref{eq:telescope} gives
$\gamma^2 \E[K_A]^2 \le S(R^2 \E[K_A] + A V)$; solving for $\E[K_A]$ yields \eqref{eq:bound2}.

\section{Finite-Round Stabilization in the Noiseless Case}

\begin{figure*}[t]
\centering
\includegraphics[width=0.85\textwidth]{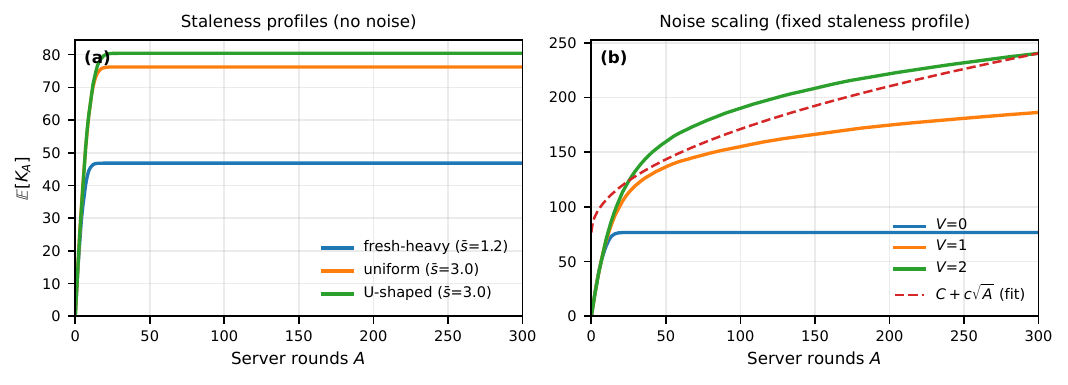}
\caption{Synthetic illustration of Theorem~\ref{thm:main} for Algorithm~\ref{alg:main}. (a) Noiseless links: different enforced staleness profiles $\bm{\alpha}$ (legend shows $\bar s$) lead to different cumulative weighted mistake levels. (b) Fixed profile: increasing noise energy $V$ increases $\E[K_A]$ approximately as $C+c\sqrt{A}$.}
\label{fig:staleness-noise}
\end{figure*}

Theorem~\ref{thm:main} controls cumulative (weighted) mistakes. In general, a finite mistake bound does not necessarily imply a finite-round bound in semi-asynchronous systems: counterexamples can exist.
We show that a finite-round stabilization bound \emph{does} follow in the noiseless case ($V=0$), under a mild liveness assumption and a simple within-bucket weighting property.

Throughout this section we specialize to noiseless links:
\begin{equation}
\delta_{i,t}\equiv 0,\qquad \xi_{i,t}\equiv 0,
\label{eq:noiseless}
\end{equation}
and define $K_\infty := \sum_{t=0}^{\infty} \kappa_t$.

\begin{theorem}[Finite-round stabilization in the noiseless case ($V=0$)]
\label{thm:rounds}
Assume \eqref{eq:noiseless} and $\alpha_0>0$, and that every participating client executes at least one complete pass over its local dataset $T_i$ in Algorithm~\ref{alg:main}.
Assume further:
\begin{itemize}
\item[(L1)] (\emph{Uniform fresh participation}) There exists $p_{\min}>0$ such that for every client $i$ and every round $t$,
\[
\mathbb{P}\big(i\in\cA_t,\ s_{i,t}=0\ \big|\ H_t\big)\ \ge\ p_{\min}
\]
\end{itemize}
The server uses any within-bucket rule (in the noiseless setting this may depend on the reported mistake counts $\{k_{i,t}\}$) such that whenever some fresh client makes a mistake in round $t$,
\[
\sum_{i\in \cB_{0,t}}\mu_{i,t}k_{i,t}\ \ge\ \alpha_0.
\]
Define
\begin{equation}
\begin{split}
T_{\mathrm{hit}} &:= \inf\{t\ge 0:\ y\ip{w_t}{x}>0,\ \forall (x,y)\in T\},\\
T_{\mathrm{stab}} &:= \inf\{t\ge 0:\ y\ip{w_r}{x}>0,\ \forall (x,y)\in T,\ \forall r\ge t\}.
\end{split}
\label{eq:ThitTstab}
\end{equation}
Then
\begin{equation}\label{eq:hit-stab-bounds}
\begin{aligned}
\mathbb{E}[T_{\mathrm{hit}}] &\le \frac{S R^2}{\alpha_0 p_{\min}\gamma^2}, \\
\mathbb{E}[T_{\mathrm{stab}}] &\le (\tau+1)\frac{S R^2}{\alpha_0 p_{\min}\gamma^2}.
\end{aligned}
\end{equation}

\end{theorem}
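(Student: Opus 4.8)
The plan is to reduce both inequalities in \eqref{eq:hit-stab-bounds} to the noiseless specialization of Theorem~\ref{thm:main}. When $V=0$ the theorem gives the \emph{deterministic} bound $K_A\le SR^2/\gamma^2$ for every $A$, and since $\kappa_t\ge 0$ this upgrades to $K_\infty\le SR^2/\gamma^2$. On top of this I would use two structural facts specific to the noiseless setting: (a) every server round whose iterate $w_t$ fails to separate $T$ incurs a guaranteed amount of expected weighted mistakes; and (b) any window of $\tau+1$ consecutive iterates that all separate $T$ is ``absorbing,'' i.e.\ forces every later iterate to separate $T$.

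For (a), fix a round $t$ and work on the $H_t$-measurable event $\{w_t\text{ does not separate }T\}$. Pick, as a function of $w_t$, some $(x_0,y_0)\in T$ with $y_0\ip{w_t}{x_0}\le 0$ and let $j$ be the client with $(x_0,y_0)\in T_j$. In the noiseless model, if $j$ appears as a fresh update ($j\in\cB_{0,t}$) it starts its local run from $\tilde w^{\mathrm{dl}}_{j,t}=w_t$; since it performs at least one complete pass over $T_j$ and $w_t$ misclassifies $(x_0,y_0)\in T_j$, it makes at least one local mistake, so $k_{j,t}\ge 1$, and the within-bucket hypothesis forces $\kappa_t\ge\sum_{i\in\cB_{0,t}}\mu_{i,t}k_{i,t}\ge\alpha_0$. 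Because $j$ is determined by $H_t$, (L1) gives $\mathbb{P}(j\in\cB_{0,t}\mid H_t)\ge p_{\min}$, hence $\E[\kappa_t\mid H_t]\ge\alpha_0\,\mathbb{P}(\kappa_t\ge\alpha_0\mid H_t)\ge\alpha_0 p_{\min}$ on that event. Writing $N$ for the (possibly infinite) number of rounds in which $w_t$ does not separate $T$, summing $\E[\kappa_t]\ge\alpha_0 p_{\min}\,\mathbb{P}(w_t\text{ imperfect})$ over $t\ge 0$ and using $\E[K_\infty]\le SR^2/\gamma^2$ yields $\E[N]\le SR^2/(\alpha_0 p_{\min}\gamma^2)$; the same summation, together with $\{T_{\mathrm{hit}}>t\}\subseteq\{w_t\text{ imperfect}\}$ and $\E[T_{\mathrm{hit}}]=\sum_{t\ge0}\mathbb{P}(T_{\mathrm{hit}}>t)$, gives the first bound in \eqref{eq:hit-stab-bounds}.

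For (b), note that a client reading a perfect iterate $w_{t-s}$ makes no local mistakes and returns it unchanged, while a padded empty bucket contributes $\alpha_s w_{t-s}$ as well; thus if $w_{t-\tau},\dots,w_t$ all separate $T$ then $w_{t+1}=\sum_{s=0}^{\tau}\alpha_s w_{t-s}$, a convex combination of perfect separators, which is again a perfect separator because $\sum_{s}\alpha_s\, y\ip{w_{t-s}}{x}\ge 0$ with at least one term strictly positive (some $\alpha_s>0$). Iterating, a run of $\tau+1$ consecutive perfect iterates makes every subsequent iterate perfect, so any maximal run of perfect iterates that is eventually followed by an imperfect one has length at most $\tau$; equivalently, consecutive imperfect rounds are at most $\tau+1$ apart. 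Since $w_0=0$ is imperfect, listing the imperfect rounds as $0=t_1<\cdots<t_N$ gives $t_N\le(N-1)(\tau+1)$, and because all iterates after $t_N$ are perfect, $T_{\mathrm{stab}}\le t_N+1\le(\tau+1)N$ (trivially so when $N=\infty$). Taking expectations and inserting the bound on $\E[N]$ gives the second inequality in \eqref{eq:hit-stab-bounds}.

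I expect the main obstacle to be fact (b): one must argue carefully why a ``good'' server iterate can nonetheless be spoiled---by stale in-flight updates computed from an earlier bad iterate---and then verify that exactly $\tau+1$ clean consecutive rounds drain that pipeline, while handling the $w_s\equiv 0$ convention for $s<0$ (harmless here precisely because $w_0$ is itself imperfect, so no absorbing window can straddle index $0$). The rest---fact (a) and the passage from a finite mistake budget to sums of event probabilities---is routine.
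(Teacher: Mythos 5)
Your proposal is correct and follows essentially the same route as the paper's proof: the same one-round lower bound $\E[\kappa_t\mid H_t]\ge\alpha_0 p_{\min}\mathbf{1}\{w_t\text{ imperfect}\}$ via (L1) and the fresh-mistake weight condition, the same reduction to $\E[K_\infty]\le SR^2/\gamma^2$, and the same absorbing-window fact that $\tau+1$ consecutive separating iterates force all later iterates to separate. The only cosmetic difference is that you count gaps between consecutive imperfect rounds to get $T_{\mathrm{stab}}\le(\tau+1)N$, whereas the paper uses the pointwise domination $\mathbf{1}\{t<T_{\mathrm{stab}}\}\le\sum_{j=0}^{\tau}\mathbf{1}\{\mathcal{E}_{t-j}\}$; both yield the identical bound.
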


\begin{remark}[Noisy vs.\ noiseless within-bucket weighting]
\label{rem:noisy-vs-noiseless-weights}
In Theorem~\ref{thm:main} (noisy links), the bucket-only determinism of $\{\mu_{i,t}\}$ is used in Lemma~\ref{lem:global} so that, conditioning on $\mathcal{F}_t$, the weights are fixed and the zero-mean noise terms vanish in expectation. Under \eqref{eq:noiseless} (i.e., $V=0$), these noise terms are identically zero, so within-bucket weights may be chosen in any (possibly mistake-aware) way to enforce the fresh-mistake weight condition; e.g., put the fresh mass $\alpha_0$ uniformly on $\{i\in\cB_{0,t}:k_{i,t}>0\}$ (and zero on the rest).
\end{remark}

\begin{proof}
Work in the noiseless regime \eqref{eq:noiseless}. For each round $t$, define the event
\[
\mathcal{E}_t := \{\, y\ip{w_t}{x}\le 0 \text{ for some }(x,y)\in T \,\},
\]
i.e., $\mathcal{E}_t$ is the event that $w_t$ is \emph{not} globally correct.

\paragraph{Step 1: A one-round lower bound on $\E[\kappa_t\mid H_t]$.}
Fix $t\ge 0$ and condition on $H_t$.

If $\mathcal{E}_t$ does not occur then $\mathbf{1}\{\mathcal{E}_t\}=0$ and the inequality below is trivial.
Otherwise, since $T=\biguplus_{i=1}^m T_i$, there exists a client index
$i_t\in\{1,\ldots,m\}$ and some $(x,y)\in T_{i_t}$ such that $y\ip{w_t}{x}\le 0$.

Consider the fresh-participation event
\[
G_t := \{\, i_t\in\cA_t,\ s_{i_t,t}=0 \,\}.
\]
By (L1), $\mathbb{P}(G_t\mid H_t)\ge p_{\min}$.
On $G_t$, client $i_t$ starts from $w_t$ (noiseless downlink) and executes at least one epoch over $T_{i_t}$.
Because $w_t$ misclassifies at least one example in $T_{i_t}$, the first epoch must contain at least one perceptron mistake update, hence $k_{i_t,t}\ge 1$.
Therefore, on $G_t$ there exists a fresh client that makes a mistake in round $t$, and by the fresh-mistake weight condition we have
\[
\kappa_t
=\sum_{i\in\cA_t}\mu_{i,t}k_{i,t}
\ge \sum_{i\in \cB_{0,t}}\mu_{i,t}k_{i,t}
\ge \alpha_0.
\]
Consequently,
\[
\E[\kappa_t\mid H_t]
\ge \alpha_0 p_{\min}\,\mathbf{1}\{\mathcal{E}_t\}.
\]

\paragraph{Step 2: Bounding the expected number of incorrect rounds.}
Taking expectations and summing over $t\ge 0$ gives
\[
\E[K_\infty]
= \sum_{t=0}^\infty \E[\kappa_t]
= \sum_{t=0}^\infty \E\big[\E[\kappa_t\mid H_t]\big]
\ge \alpha_0 p_{\min}\sum_{t=0}^\infty \mathbb{P}(\mathcal{E}_t).
\]
By Theorem~1 with $V=0$, $\mathbb{E}[K_A]\le SR^2/\gamma^2$ for all $A$, hence
$\mathbb{E}[K_\infty]\le SR^2/\gamma^2$. Therefore
\[
\sum_{t=0}^\infty \mathbb{P}(\mathcal{E}_t)
 \le \frac{SR^2}{\alpha_0 p_{\min}\gamma^2}.
\]

\paragraph{Bound for $T_{\mathrm{hit}}$.}
By definition of $T_{\mathrm{hit}}$ in \eqref{eq:ThitTstab}, for every sample path and every $t<T_{\mathrm{hit}}$ the iterate $w_t$ is not globally correct, i.e., $\mathcal{E}_t$ occurs. Hence
\[
T_{\mathrm{hit}}
= \sum_{t=0}^\infty \mathbf{1}\{t<T_{\mathrm{hit}}\}
\le \sum_{t=0}^\infty \mathbf{1}\{\mathcal{E}_t\}.
\]
Taking expectations yields
\[
\mathbb{E}[T_{\mathrm{hit}}] \le \sum_{t=0}^\infty \mathbb{P}(\mathcal{E}_t)
\le \frac{SR^2}{\alpha_0 p_{\min}\gamma^2}.
\]

\paragraph{A window condition implying stabilization.}
Suppose that for some round $t$ we have $y\ip{w_{t-j}}{x}>0$ for all $(x,y)\in T$ and all $j=0,1,\ldots,\tau$.
Consider any client update applied at round $t$: its total staleness satisfies $s_{i,t}\le\tau$, so the client starts from $w_{t-s_{i,t}}$, which is globally correct and in particular correct on $T_i$.
Under Algorithm~\ref{alg:main}, this implies the client makes no local mistake updates and returns $w_{i,t}=w_{t-s_{i,t}}$.
Padding terms in \eqref{eq:update} are also among $\{w_{t-j}\}_{j=0}^{\tau}$.
Therefore $w_{t+1}$ is a convex combination of globally correct vectors.
Since the set $\{w:\ y\ip{w}{x}>0,\ \forall (x,y)\in T\}$ is an intersection of (open) halfspaces, it is convex, hence $w_{t+1}$ is also globally correct.
Repeating the same argument inductively shows $w_r$ is globally correct for all $r\ge t$, i.e., $T_{\mathrm{stab}}\le t$.

Equivalently, if $t<T_{\mathrm{stab}}$ then the window $\{w_{t-j}\}_{j=0}^{\tau}$ cannot be entirely correct, so at least one of $w_t,w_{t-1},\ldots,w_{t-\tau}$ is incorrect. Thus, with the convention $\mathbf{1}\{\mathcal{E}_s\}=0$ for $s<0$,
\[
\mathbf{1}\{t<T_{\mathrm{stab}}\} \le \sum_{j=0}^{\tau} \mathbf{1}\{\mathcal{E}_{t-j}\}.
\]
Summing over $t\ge 0$ gives
\begin{align*}
T_{\mathrm{stab}}
&= \sum_{t=0}^\infty \mathbf{1}\{t<T_{\mathrm{stab}}\} \\
&\le \sum_{t=0}^\infty \sum_{j=0}^{\tau}\mathbf{1}\{\mathcal{E}_{t-j}\} \\
&= (\tau+1)\sum_{t=0}^\infty \mathbf{1}\{\mathcal{E}_t\}.
\end{align*}
Taking expectations and using the bound on $\sum_{t\ge 0}\mathbb{P}(\mathcal{E}_t)$ yields
\[
\mathbb{E}[T_{\mathrm{stab}}] \le (\tau+1)\sum_{t=0}^\infty \mathbb{P}(E_t)
\le (\tau+1)\frac{SR^2}{\alpha_0 p_{\min}\gamma^2}.
\]

This establishes (31). 
\end{proof}

\section{Experiments}
\label{sec:experiments}

We provide a synthetic experiment to illustrate the qualitative behavior predicted by Theorem~\ref{thm:main}. We generate linearly separable data satisfying Definition~1, partition it across clients, and run Algorithm~\ref{alg:main} for $A$ server rounds under partial participation and bounded total staleness $\tau=\tau_{\mathrm{dl}}+\tau_{\mathrm{ul}}$. For illustration we use a simple randomized participation/delay schedule (the theory does not require any such model) and evaluate several enforced profiles $\bm{\alpha}$; Fig.~\ref{fig:staleness-noise} reports Monte Carlo estimates of $\E[K_A]$.
In the noiseless case ($V=0$), panel~(a) shows that emphasizing fresher updates (smaller enforced mean staleness $\bar s$) yields a smaller mistake budget, consistent with the $S=1+\bar s$ dependence in \eqref{eq:bound2}. Panel~(b) fixes $\bm{\alpha}$ and varies the total noise energy $V$ (implemented via zero-mean additive perturbations with bounded second moment), illustrating the predicted $O(\sqrt{A})$ growth in \eqref{eq:bound2}.

\section{Discussion}
\label{sec:discussion}

Algorithm~\ref{alg:main} implements a deterministic \emph{age-mixing} rule: in every round the server forms a convex combination whose total-staleness distribution matches a chosen profile $\bm{\alpha}$, padding missing buckets with cached iterates. This removes the need to model delays or participation and explains Theorem~\ref{thm:main}: asynchrony enters only through the enforced mean staleness $\bar s$ (via $S=1+\bar s$), while link noise contributes through $V$ in \eqref{eq:bound2}. Since $S$ appears linearly in $SR^2/\gamma^2$ and inside the square root in $\sqrt{SAV}/\gamma$, reducing $\bar s$ improves both the noiseless plateau and the noisy $\sqrt{A}$ slope.

A practical profile-design heuristic is to estimate bucket-occupancy frequencies online and choose $\bm{\alpha}$ supported on stalenesses that are reliably present; otherwise that mass is frequently assigned to cached iterates, which is safe but reduces the fraction of new information mixed each round. Among feasible profiles, \eqref{eq:bound2} suggests minimizing $\bar s$ to shrink both terms. Finally, \eqref{eq:bound2} also reveals a noise-limited regime: once $\sqrt{SAV}/\gamma$ dominates $SR^2/\gamma^2$, additional rounds yield diminishing returns unless the effective $V$ is reduced.

\section{Conclusion}

We analyzed a semi-asynchronous distributed perceptron with bounded downlink staleness, bounded uplink/computation turnaround delay, partial participation, and additive uplink/downlink communication noise. The finite-horizon guarantee in Theorem~\ref{thm:main} makes the dependence on the \emph{server-enforced} mean total staleness and on noise energy explicit, and recovers standard perceptron \cite{novikoff1962} and IPM \cite{mcdonald2010} results as special cases. In addition, in the noiseless regime, Theorem~\ref{thm:rounds} yields a finite-round stabilization guarantee under a mild fresh-participation condition.

\clearpage

\end{document}